\newcommand{\KL}[2]{\text{KL}(#1 \, || \, #2)}
\newcommand{\btheta}{\boldsymbol\theta}
\newcommand{\bxi}{{\boldsymbol\xi}}
\newcommand{\bmu}{{\boldsymbol\mu}}
\newcommand{\bSigma}{{\boldsymbol\Sigma}}
\newcommand{\x}{\mathbf{x}}
\newcommand{\w}{\mathbf{w}}
\newcommand{\z}{\mathbf{z}}
\newcommand{\dataset}{\mathcal{D}}
\newcommand{\im}{\text{im}}
\theoremstyle{plain}
\newtheorem{theo}{Theorem}
\newtheorem{defi}{Definition}
\title{Differentially Private Variational Inference for Non-conjugate Models}
\author{\textbf{Joonas Jälkö}$^1$, \textbf{Onur Dikmen}$^1$ and \textbf{Antti Honkela}$^{1,2,3}$\\
  {$^1$ Helsinki Institute for Information Technology (HIIT), Department of Computer Science}\\
  {$^2$ Department of Mathematics and Statistics} \hspace{2em}
  {$^3$ Department of Public Health}\\
  {University of Helsinki}}
\begin{document}
\maketitle

%%%%%%%%%%%%%%%%%%%%%%%%%%%%%%%%%%%%%%%%%%%%%%%%

\begin{abstract}
  Many machine learning applications are based on data collected from
  people, such as their tastes and behaviour as well as biological
  traits and genetic data. Regardless of how important the application
  might be, one has to make sure individuals' identities or the
  privacy of the data are not compromised in the analysis.
  Differential privacy constitutes a powerful framework that prevents
  breaching of data subject privacy from the output of a computation.
  Differentially private versions of many important Bayesian inference
  methods have been proposed, but there is a lack of an efficient unified
  approach applicable to arbitrary models. In this contribution, we
  propose a differentially private variational inference method with a
  very wide applicability. It is built on top of doubly stochastic
  variational inference, a recent advance
  which provides a variational solution to a large class of models.
  We add differential privacy into doubly stochastic variational
  inference by clipping and perturbing
  the gradients. The algorithm is made more efficient through privacy
  amplification from subsampling. We demonstrate the
  method can reach an accuracy close to non-private level under
  reasonably strong privacy guarantees, clearly improving over
  previous sampling-based alternatives especially in the strong
  privacy regime.
\end{abstract}

%%%%%%%%%%%%%%%%%%%%%%%%%%%%%%%%%%%%%%%%%%%%%%%%

\section{INTRODUCTION}
Using more data usually leads to better generalisation and accuracy in
machine learning. With more people getting more tightly involved in
the ubiquitous data collection, privacy concerns related to the data
are becoming more important. People will be much more willing to
contribute their data if they can be sure that the privacy of their
data can be protected.

Differential privacy (DP) \citep{dwork_et_al_2006, DworkRoth} is a
strong framework with strict privacy guarantees against attacks from
adversaries with side information. The main principle is
that the output of an algorithm (such as a query or an estimator) should
not change much if the data for one individual are modified or deleted.
This can be accomplished through adding stochasticity at different
levels of the estimation process, such as adding noise to data itself
(input perturbation), changing the objective function to be optimised
or how it is optimised (objective perturbation), releasing the
estimates after adding noise (output perturbation) or by sampling from
a distribution based on utility or goodness of the alternatives (exponential
mechanism).

A lot of ground-breaking work has been done on privacy-preserving
versions of standard machine learning approaches, such as
objective-perturbation-based logistic regression
\citep{chaudhuri08privacy}, regression using functional
mechanism~\citep{zhang12functional} to name a few. Privacy-preserving
Bayesian inference
(e.g.~\citep{williams10probabilistic,zhang14privbayes}) has only
recently started attracting more interest.
The result of \citet{Dimitrakakis2014} showing that the posterior
distribution is under certain assumptions differentially private is
mathematically elegant, but does not lead to practically useful
algorithms.  Methods based on this approach suffer from the major
weakness that the privacy guarantees are only valid for samples drawn
from the exact posterior which is usually impossible to guarantee in
practice.  Methods based on perturbation of data sufficient statistics
\citep{zhang16differential,foulds16theory,Honkela2016} are
asymptotically efficient, but they are only applicable to exponential
family models which limits their usefulness.  The sufficient statistic
perturbation approach was recently also applied to variational
inference~\citep{Park2016}, which is again applicable to models
where non-private inference can be performed by accessing sufficient
statistics.

General differentially private Bayesian inference can be realised most
easily using the gradient perturbation mechanism.  This was first
proposed by \citet{wang15privacy}, who combine differential privacy by
gradient perturbation with stochastic gradient Markov chain Monte
Carlo (MCMC) sampling.  This approach works in principle for arbitrary
models, but because of the gradient perturbation mechanism each MCMC
iteration will consume some privacy budget, hence severely limiting
the number of iterations that can be run which can cause difficulties
with the convergence of the sampler.

Our goal in this work is to apply the gradient perturbation mechanism
to devise a generic differentially private variational inference
method. Variational inference seems preferable to stochastic gradient
MCMC here because a good optimiser should be able to make better use
of the limited gradient evaluations and the variational approximation
provides a very efficient summary of the posterior.
The recently proposed doubly stochastic
variational inference~\citet{Titsias2014} and the further streamlined
automatic differentiation variational inference (ADVI) method
\citep{Kucukelbir2017} provide a generic variational inference method
also applicable to non-conjugate models.
These approaches apply a series of transformations and approximations so
that the variational distributions are Gaussian and can be optimised
by stochastic gradient ascent. Here, we propose differentially
private variational inference (DPVI) based on gradient clipping and
perturbation as well as double stochasticity. We make a thorough case study on the Bayesian logistic
regression model with comparisons to the non-private case under
different design decisions for DPVI. We also test the performance 
of DPVI with a Gaussian mixture model.

%%%%%%%%%%%%%%%%%%%%%%%%%%%%%%%%%%%%%%%%%%%%%%%%

\section{BACKGROUND}
\subsection{DIFFERENTIAL PRIVACY}
Differential privacy (DP) \citep{dwork_et_al_2006,DworkRoth} is a framework
that provides mathematical formulation for privacy that enables
proving strong privacy guarantees.
\begin{defi}[$\epsilon$-Differential privacy]
\label{eps-dp}
A randomised algorithm $\mathcal{A}$ is $\epsilon$-differentially
private if for all pairs of adjacent data sets, i.e., differing only in
one data sample, $x,x'$, and for all sets $S \subset \im(\mathcal{A})$
\begin{equation*}
\Pr(\mathcal{A}(x) \in S) \leq e^\epsilon \Pr(\mathcal{A}(x') \in S).
\end{equation*}
\end{defi}
There are two different variants depending on which data sets are
considered adjacent: in \emph{unbounded DP} data sets $x,x'$ are
adjacent if $x'$ can be obtained from $x$ by adding or removing an
entry, while in \emph{bounded DP} $x,x'$ are adjacent if they are of
equal size and equal in all but one of their elements
\citep{DworkRoth}.  The definition is symmetric in $x$ and $x'$ which
means that in practice the probabilities of obtaining a specific
output from either algorithm need to be similar.  The privacy
parameter $\epsilon$ measures the strength of the guarantee with
smaller values corresponding to stronger privacy.

$\epsilon$-DP defined above, also known as \emph{pure DP}, is
sometimes too inflexible and a relaxed version called $(\epsilon,
\delta)$-DP is often used instead.  It is defined as follows:
\begin{defi}[$(\epsilon,\delta)$-Differential privacy]
\label{ed-dp}
A randomised algorithm $\mathcal{A}$ is $(\epsilon,\delta)$-differentially private if for all pairs of adjacent data sets $x,x'$ and for every $S \subset \im(\mathcal{A})$
\begin{align*}
\Pr(\mathcal{A}(x) \in S) \leq e^\epsilon \Pr(\mathcal{A}(x') \in S) + \delta.
\end{align*}
\end{defi}
It can be shown that $(\epsilon, \delta)$-DP provides a probabilistic
$\epsilon$-DP guarantee with probability $1-\delta$ \citep{DworkRoth}.

\subsubsection{Gaussian mechanism}
There are many possibilities how to make algorithm differentially
private. In this paper we use \emph{objective perturbation}. We use
the \emph{Gaussian mechanism} as our method for perturbation.
\citet[Theorem 3.22]{DworkRoth} state that given query $f$ with
$\ell_2$-sensitivity of $\Delta_2(f)$, releasing $f(x)+\eta$, where
$\eta \sim N(0, \sigma^2)$, is $(\epsilon, \delta)$-DP when
\begin{equation}
  \label{eq:gauss_sigmabound}
  \sigma^2 > 2\ln(1.25/\delta)\Delta^2_2(f)/\epsilon^2.
\end{equation}
The important $\ell_2$-sensitivity of a query is defined as:
\begin{defi}[$\ell_2$-sensitivity]
Given two adjacent data sets $x, x'$, $\ell_2$-sensitivity of query $f$ is
\begin{align*}
\Delta_2 (f) = \sup_{\stackrel{x,x'}{||x-x'||=1}} || f(x)-f(x') ||_2.
\end{align*} 
\end{defi}

\subsubsection{Composition theorems}
\label{comp}

One of the very useful features of DP compared to many other privacy
formulations is that it provides a very natural way to study the
privacy loss incurred by repeated use of the same data set.  Using an
algorithm on a data set multiple times will weaken our privacy
guarantee because of the potential of each application to leak more
information.  The DP variational inference algorithm proposed in this
paper is iterative, so we need to use composition theorems to bound
the total privacy loss.

The simplest basic composition \cite{DworkRoth} shows that a $k$-fold
composition of an $(\epsilon, \delta)$-DP algorithm provides
$(k\epsilon, k\delta)$-DP. More generally releasing joint output of
$k$ algorithms $\mathcal{A}_i$ that are individually $(\epsilon_i,
\delta_i)$-DP will be $(\sum_{i=1}^k \epsilon_i, \sum_{i=1}^k
\delta_i)$-DP. Under pure $\epsilon$-DP when $\delta_1 = \dots =
\delta_k = 0$ this is the best known composition that yields a pure DP
algorithm.

Moving from the pure $\epsilon$-DP to general $(\epsilon, \delta)$-DP
allows a stronger result with a smaller $\epsilon$ at the expense of
having a larger total $\delta$ on the composition.  This trade-off
is characterised by the Advanced composition theorem of
\citet[][Theorem 3.20]{DworkRoth}, which becomes
very useful when we need to use data multiple times
\begin{theo}[Advanced composition theorem]
Given algorithm $\mathcal{A}$ that is $(\epsilon, \delta)$-DP and $\delta'>0$, $k$-fold composition of algorithm $\mathcal{A}$ is $(\epsilon_{tot}, \delta_{tot})$-DP with
\begin{align}
  \epsilon_{tot} &= \sqrt{2k\ln(1/\delta')}\epsilon + k\epsilon(e^\epsilon-1) \\
  \delta_{tot} &= k\delta + \delta'.
\end{align}
\end{theo}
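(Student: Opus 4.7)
The plan is to follow the classical privacy-loss random variable approach and reduce the composition bound to a concentration inequality on a sum of bounded random variables. Fix adjacent datasets $x, x'$ and let $P_i, Q_i$ denote the output distributions of the $i$-th application of $\mathcal{A}$ (possibly adaptive in previous outputs) on $x$ and $x'$ respectively. For an output $y_i$, define the privacy loss $L_i(y_i) = \ln(P_i(y_i)/Q_i(y_i))$. The key observation is that $(\epsilon_{tot}, \delta_{tot})$-DP of the $k$-fold composition is equivalent to showing that the total privacy loss $L = \sum_{i=1}^k L_i$ is at most $\epsilon_{tot}$ with probability at least $1-\delta_{tot}$ under $P = P_1 \cdots P_k$.

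The first step is a conversion lemma: $(\epsilon,\delta)$-DP implies that there exist dominating distributions $\tilde P_i, \tilde Q_i$ with total-variation distance at most $\delta$ from $P_i, Q_i$ such that $\tilde P_i, \tilde Q_i$ satisfy pure $\epsilon$-DP pointwise, i.e.\ $|L_i| \le \epsilon$ almost surely under $\tilde P_i$. Taking a union bound across the $k$ steps accounts for the additive $k\delta$ in $\delta_{tot}$; on the complementary good event $E$ (with $\Pr(E^c) \le k\delta$) we may work with the bounded losses.

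The second step bounds the expected one-step loss: for distributions obeying pure $\epsilon$-DP, a direct calculation gives $\mathbb{E}_{\tilde P_i}[L_i] \le \epsilon(e^\epsilon - 1)$ (one expands $\mathbb{E}_{\tilde P_i}[L_i] - \mathbb{E}_{\tilde Q_i}[L_i] = \mathrm{KL}(\tilde P_i\|\tilde Q_i) + \mathrm{KL}(\tilde Q_i\|\tilde P_i)$ and uses $|\tilde P_i/\tilde Q_i - 1| \le e^\epsilon - 1$). Summed over the $k$ rounds, this is the deterministic drift term $k\epsilon(e^\epsilon - 1)$ that appears in $\epsilon_{tot}$.

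The third step is concentration: conditioning on the history $y_{1:i-1}$, the centred losses $Z_i = L_i - \mathbb{E}[L_i \mid y_{1:i-1}]$ form a martingale difference sequence with $|Z_i| \le \epsilon$ on $E$. Azuma--Hoeffding then yields $\sum_i Z_i \le \epsilon\sqrt{2k\ln(1/\delta')}$ with probability at least $1 - \delta'$. Combining with Step 2 gives, on $E$ and outside the Azuma failure event, $L \le k\epsilon(e^\epsilon - 1) + \epsilon\sqrt{2k\ln(1/\delta')} = \epsilon_{tot}$, and the total failure probability is at most $k\delta + \delta' = \delta_{tot}$. Converting the tail bound on $L$ back to an $(\epsilon_{tot},\delta_{tot})$-DP statement finishes the proof.

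The main obstacle is setting up the adaptive version correctly: subsequent algorithms may depend on earlier outputs, so one must work with the conditional distributions $P_i(\cdot \mid y_{1:i-1})$ and $Q_i(\cdot \mid y_{1:i-1})$ and verify that the dominating-distribution construction and the $\epsilon$-boundedness survive this conditioning; once the filtration is properly defined, Azuma applies verbatim, but the bookkeeping around joint distributions versus per-step DP guarantees is the part most prone to error.
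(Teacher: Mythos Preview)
The paper does not actually supply its own proof of this theorem; it merely states the result and cites Theorem~3.20 of \citet{DworkRoth}. Your proposal is precisely the standard Dwork--Rothblum--Vadhan argument underlying that cited result: introduce the privacy-loss random variable, peel off the $\delta$ slack via a good-event decomposition (contributing $k\delta$), bound the per-step mean by $\epsilon(e^\epsilon-1)$, and control fluctuations with Azuma--Hoeffding (contributing $\delta'$). So there is nothing to compare against beyond the cited source, and your route matches it.

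One minor technical wrinkle: in Step~3 you write $|Z_i|\le\epsilon$, but since $L_i\in[-\epsilon,\epsilon]$ the centred increment satisfies only $|Z_i|\le 2\epsilon$ in general. The stated deviation bound $\epsilon\sqrt{2k\ln(1/\delta')}$ is nonetheless correct if you invoke the Hoeffding form of Azuma that uses the range $b_i-a_i=2\epsilon$ rather than a one-sided bound on $|Z_i|$; just make sure you quote that version when you write it out.
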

The theorem states that with small loss in $\delta_{tot}$ and with
small enough $\epsilon$, we can provide more strict $\epsilon_{tot}$
than just summing the $\epsilon$.  This is obvious by looking at the
first order expansion for small $\epsilon$ of
\[ \epsilon_{tot} \approx \sqrt{2k\ln(1/\delta')}\epsilon +
k\epsilon^2. \]

\subsubsection{Privacy amplification}
We use a stochastic gradient algorithm that uses subsampled data while
learning, so we can make use of the amplifying effect of the
subsampling on privacy. This \textit{Privacy amplification theorem}
\citep{Li} states that if we run $(\epsilon, \delta)$-DP algorithm
$\mathcal{A}$ on randomly sampled subset of data with uniform sampling
probability $q>\delta$, privacy amplification theorem states that
the subsampled algorithm is $(\epsilon_{amp}, \delta_{amp})$-DP with
\begin{align}
  \epsilon_{amp} &= \min(\epsilon, \log(1+q(e^\epsilon-1))) \\
  \delta_{amp} &= q\delta,
\end{align}
assuming $\log(1+q(e^\epsilon-1)) < \epsilon$.

\subsubsection{Moments accountant}

The moments accountant proposed by \citet{Abadi2016} is a method to
accumulate the
privacy cost that provides a tighter bound for $\epsilon$ and $\delta$
than the previous composition approaches.
The moments accountant incorporates both the composition over
iterations and privacy amplification due to subsampling into a single
bound given by the following Theorem.
\begin{theo}
  There exist constants $c_1$ and $c_2$ so that given the sampling
  probability $q = L/N$ and the number of steps $T$, for any
  $\epsilon < c_1 q^2 T$, a DP stochastic gradient algorithm that
  clips the $\ell_2$ norm of gradients to $C$ and injects Gaussian
  noise with standard deviation $2 C \sigma$ to the gradients, is
  $(\epsilon, \delta)$-DP for any $\delta > 0$ under bounded DP
  if we choose
  \begin{equation}
    \label{eq:sigmabound}
    \sigma \ge c_2 \frac{q \sqrt{T \log(2/\delta)}}{\epsilon}.
  \end{equation}
\end{theo}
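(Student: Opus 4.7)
The plan is to follow the moments accountant framework of \citet{Abadi2016}, which yields a tighter bound than strong composition combined with privacy amplification by tracking a log-moment generating function of the privacy loss random variable throughout the iterations, rather than converting to $(\epsilon, \delta)$ at every step.

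First I would define the privacy loss $c(o;\mathcal{A},d,d') = \log\frac{\Pr(\mathcal{A}(d)=o)}{\Pr(\mathcal{A}(d')=o)}$ and its $\lambda$-th log moment $\alpha_\mathcal{A}(\lambda) = \sup_{d,d'} \log \mathbb{E}_{o \sim \mathcal{A}(d)}[\exp(\lambda c(o;\mathcal{A},d,d'))]$, with the supremum ranging over adjacent data sets. Two general lemmas drive the argument. \textbf{Composability:} for an adaptive $T$-fold composition $\mathcal{A}_{1:T}$, one has $\alpha_{\mathcal{A}_{1:T}}(\lambda) \le \sum_{t=1}^T \alpha_{\mathcal{A}_t}(\lambda)$; this follows from the tower rule applied to the likelihood ratio. \textbf{Tail bound:} a mechanism with moment bound $\alpha(\lambda)$ is $(\epsilon,\delta)$-DP whenever there exists $\lambda > 0$ with $\delta \ge \exp(\alpha(\lambda) - \lambda\epsilon)$, which is a direct application of Markov's inequality to $\exp(\lambda c)$.

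The core technical step is to bound $\alpha_{M_t}(\lambda)$ for a single step of the subsampled and clipped Gaussian mechanism. By the data-processing inequality I can reduce to the worst case where the two adjacent data sets differ in a single record whose clipped gradient is a unit vector; under Poisson subsampling with probability $q$ and added Gaussian noise of scale $2C\sigma$, the pushforward distributions (after rescaling by $2C$) reduce to the one-dimensional comparison of $\mu_0 = (1-q)\, N(0,\sigma^2) + q\, N(1,\sigma^2)$ against $\mu_1 = N(0,\sigma^2)$. A direct expansion of $\mathbb{E}_{z \sim \mu_0}[(\mu_0(z)/\mu_1(z))^\lambda]$ in powers of $q$, with careful third-order remainder control, yields $\alpha_{M_t}(\lambda) \le q^2\lambda(\lambda+1)/\sigma^2 + O(q^3\lambda^3/\sigma^3)$, valid for $\lambda$ up to roughly $\sigma^2 \log(1/(q\sigma))$.

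Combining the two pieces, after $T$ steps $\alpha(\lambda) \lesssim Tq^2\lambda^2/\sigma^2$, so the tail bound requires $\delta \ge \exp(Tq^2\lambda^2/\sigma^2 - \lambda\epsilon)$ for some admissible $\lambda$. Optimising over $\lambda$ gives $\lambda^* \approx \epsilon\sigma^2/(2Tq^2)$ and yields the stated condition $\sigma \ge c_2 q\sqrt{T\log(2/\delta)}/\epsilon$. The hypothesis $\epsilon < c_1 q^2 T$ is precisely what is needed so that $\lambda^*$ stays inside the regime where the single-step moment bound is valid; outside that regime the $O(q^3\lambda^3/\sigma^3)$ correction would dominate and spoil the bound. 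The main obstacle is this single-step moment bound for the subsampled Gaussian, which requires genuine integral estimates on the Gaussian-mixture likelihood ratio; once that is in hand the rest of the proof is bookkeeping built on Markov's inequality and the tower rule.
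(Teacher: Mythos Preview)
Your proposal is essentially a reproduction of the full moments-accountant proof of \citet{Abadi2016}.  That argument is correct, but it is not what the paper does, and it somewhat misses the point of the statement.  The only content of this theorem \emph{beyond} Abadi et~al.\ is the passage from \emph{unbounded} DP (add/remove adjacency) to \emph{bounded} DP (replace adjacency), which is what forces the noise scale $2C\sigma$ rather than $C\sigma$ and the $\log(2/\delta)$ rather than $\log(1/\delta)$.  The paper's proof is a two-line reduction: first, quote Abadi et~al.\ as a black box to get $(\tfrac{1}{2}\epsilon,\tfrac{1}{2}\delta)$ unbounded DP when the noise is doubled to $2C\sigma$; second, observe that a replacement can be written as one removal composed with one addition, so any $(\tfrac{1}{2}\epsilon,\tfrac{1}{2}\delta)$ unbounded-DP mechanism is automatically $(\epsilon,\delta)$ bounded-DP.

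Your route would also work---under replacement the clipped-gradient difference has norm at most $2C$, so after rescaling by $2C$ one lands on exactly the same one-dimensional subsampled-Gaussian comparison $\mu_0$ vs.\ $\mu_1$ you wrote down, and the rest of the moments-accountant calculus goes through unchanged.  But you never actually say this; your sentence ``the two adjacent data sets differ in a single record whose clipped gradient is a unit vector'' reads as the add/remove case, and the reason the rescaling factor is $2C$ rather than $C$ is left implicit.  If you want a self-contained proof, make that sensitivity-$2C$ step explicit.  If you are willing to cite Abadi et~al., the paper's reduction is far shorter and makes transparent exactly where each factor of~$2$ originates.
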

\begin{proof}
  \citet{Abadi2016} show that injecting gradient noise with standard
  deviation $C \sigma$ where $\sigma$ satisfies the inequality
  (\ref{eq:sigmabound}) yields an $(\epsilon, \frac{1}{2}\delta)$-DP
  algorithm under unbounded DP.  This implies that adding noise with
  standard deviation $2 C \sigma$ yields an
  $(\frac{1}{2}\epsilon, \frac{1}{2}\delta)$-DP algorithm under
  unbounded DP.

  This proves the theorem as any
  $(\frac{1}{2}\epsilon, \frac{1}{2}\delta)$ unbounded DP algorithm is
  an $(\epsilon, \delta)$ bounded DP algorithm.  This follows
  from the fact that the replacement of an element in the data set can
  be represented as a composition of one addition and one removal of
  an element.
\end{proof}
Similar bounds can also be derived using concentrated DP
\citep{Dwork2016,Bun2016}.

We use the implementation of \citet{Abadi2016} to compute the total
$\epsilon$ privacy cost with a given $\delta$-budget, standard
deviation $\sigma$ of noise applied in Gaussian mechanism and
subsampling ratio $q$.

In our experiments we report results using both the advanced
composition theorem with privacy amplification as well as the moments
accountant.

%------------------------------------------------------------------------
\subsection{VARIATIONAL BAYES}
Variational Bayes (VB) methods \citep{jordan99introduction} provide a
way to approximate the posterior distribution of latent variables in a
model when the true posterior is intractable. True posterior
$p(\btheta|\x)$ is approximated with a variational distribution
$q_\bxi(\btheta)$ that has a simpler form than the posterior, obtained
generally by removing some dependencies from the graphical model such
as the fully-factorised form
$q_\bxi(\btheta)=\prod_d q_{\bxi_d}(\theta_d)$. $\bxi$ are the
variational parameters and their optimal values $\bxi^*$ are obtained
through minimising the Kullback-Leibler (KL) divergence between
$q_\bxi(\btheta)$ and $p(\btheta|\x)$. This is also equivalent to
maximising the \emph{evidence lower bound} (ELBO)
\begin{align*}
\mathcal{L}(q_\bxi) &= \int q_\bxi(\btheta) \ln \left( \frac{p(\dataset, \btheta)}{q_\bxi(\btheta)} \right) \\ 
%= -\KL{q_\bxi(\btheta)}{p(\btheta)} + \langle \ln p(\x|\btheta) \rangle_{q_\bxi(\btheta)} \\
&=-\KL{q_\bxi(\btheta)}{p(\btheta)} + \sum_{i=1}^B \langle \ln p(x_i|\btheta) \rangle_{q_\bxi(\btheta)},
\end{align*} 
where $\langle \rangle_{q_\bxi(\btheta)}$ is an expectation taken w.r.t $q_\bxi(\btheta)$
and the observations $\dataset = \{\x_1, \dots, \x_N\}$ are
assumed to be exchangeable under our model.

When the model is in the conjugate exponential family
\citep{ghahramani2001propagation} and $q_\bxi(\btheta)$ is factorised,
the expectations that constitute $\mathcal{L}(q_\bxi)$ are
analytically available and each $\bxi_d$ is updated iteratively by
fixed point iterations. Most popular applications of VB fall into this
category, because handling of the more general case involves more
approximations, such as defining another level of lower bound to the ELBO
or estimating the expectations using Monte Carlo integration.

\subsubsection{Doubly stochastic variational inference}

An increasingly popular alternative approach is the doubly stochastic
variational inference framework proposed by \citet{Titsias2014}.  The
framework is based on stochastic gradient optimisation of the ELBO.
The expectation over $q_\bxi(\btheta)$ is evaluated using Monte Carlo
sampling.  Exchanging the order of integration (expectation) and
differentiation and using the reparametrisation trick to represent for
example samples from a Gaussian approximation
$q_{\bxi_i}(\btheta_i) = N(\btheta_i ; \bmu_i, \bSigma_i)$ as
$\btheta_i = \bmu_i + \bSigma_i^{1/2} \z, \; \z \sim N(0, I)$, it is
possible to obtain stochastic gradients of the ELBO which can be fed
to a standard stochastic gradient ascent (SGA) optimisation algorithm.

For models with exchangeable observations, the ELBO objective can be
broken down to a sum of terms for each observation:
\begin{align*}
\mathcal{L}(q_\bxi)
  &=-\KL{q_\bxi(\btheta)}{p(\btheta)} + \sum_{i=1}^N \langle \ln p(x_i|\btheta) \rangle_{q_\bxi(\btheta)} \\
  &= \sum_{i=1}^N \left( \langle \ln p(x_i|\btheta) \rangle_{q_\bxi(\btheta)} -\frac{1}{N} \KL{q_\bxi(\btheta)}{p(\btheta)} \right) \\
  & =: \sum_{i=1}^N \mathcal{L}_i(q_\bxi).
\end{align*} 
This allows considering mini batches of data at each iteration
to handle big data sets, which adds another level of stochasticity to
the algorithm.

The recently proposed Automatic Derivation Variational Inference (ADVI)
framework \citep{Kucukelbir2017} unifies
different classes of models through a transformation of variables and
optimises the ELBO using stochastic gradient ascent (SGA). Constrained
variables are transformed into unconstrained ones and their posterior
is approximated by Gaussian variational distributions, which can be a
product of independent Gaussians (mean-field) or larger multivariate
Gaussians. Expectations in the gradients are approximated using Monte
Carlo integration and the ELBO is optimised iteratively using SGA.

%%%%%%%%%%%%%%%%%%%%%%%%%%%%%%%%%%%%%%%%%%%%%%%%

\section{DIFFERENTIALLY-PRIVATE VARIATIONAL INFERENCE}
Differentially-private variational inference (DPVI) is based on
clipping of the contributions of individual data samples to the
gradient, $g_t(x_i) = \nabla \mathcal{L}_i(q_{\bxi_t})$, at each
iteration $t$ of the stochastic
optimisation process and perturbing the total gradient.  The
algorithm is presented in Algorithm \ref{alg:dpvi}.  The algorithm is
very similar to the one used for deep learning by \citet{Abadi2016}.
DPVI can
be easily implemented using automatic differentiation software such as
Autograd, or incorporated even more easily into automatic inference
engines, such as the ADVI implementations in PyMC3
\citep{Salvatier2016} or Edward \citep{tran2017deep} which also
provide subsampling. Each
$g_t(x_i)$ is clipped in order to calculate gradient sensitivity.
Gradient contributions from all data samples in the mini batch are
summed and perturbed with Gaussian noise
$\mathcal{N}(0, 4c_t^2\sigma^2 \mathbf{I})$.

\begin{algorithm}[tb]
   \caption{DPVI}
   \label{alg:dpvi}
\begin{algorithmic}
   \STATE {\bfseries Input:} Data set $\dataset$, sampling probability $q$, number of iterations $T$, SGA step size $\eta$, Clipping threshold $c_t$ and initial values $\bxi_0$.
   \FOR{$t \in [T] $}
	\STATE	Pick random sample $U$ from $\dataset$ with sampling probability $q$\;
	\STATE	Calculate the gradient $g_t(x_i) = \nabla \mathcal{L}_i(q_{\bxi_t})$  for each $i \in U$\;
	\STATE	Clip and sum gradients:
	\STATE  $\tilde{g}_t(x_i) \leftarrow g_t(x_i) / \max(1, \frac{||g_t(x_i)||_2)}{c_t})$\;					
	\STATE	$\tilde{g}_t \leftarrow \sum_i \tilde{g}_t(x_i)$\;
	\STATE	Add noise:  $\tilde{g}_t \leftarrow \tilde{g}_t + \mathcal{N}(0, 4c_t^2\sigma^2 \mathbf{I})$\;
	\STATE	Update AdaGrad parameter. $G_t \leftarrow G_{t-1}+\tilde{g}_t^2$ \;
	\STATE	Ascent: $\boldsymbol\xi_{t} \leftarrow \boldsymbol\xi_{t-1} + \eta \tilde{g}_t/\sqrt{G_t}$	
   \ENDFOR
\end{algorithmic}
\end{algorithm}

The sampling frequency $q$ for subsampling within the data set, total
number of iterations $T$ and the variance $\sigma^2$ of Gaussian noise are
important design parameters that determine the privacy cost. $c_t$ is
chosen before learning, and does not need to be constant. After
clipping $||g_t(x_i)||_2 \leq c_t, \, \forall i \in U$. Clipping
gradients too much will affect accuracy, but on the other hand large
clipping threshold will cause large amount of noise to sum of
gradients. Parameter $q$ determines how large subsample of the
training data we use to for gradient ascent. Small $q$ values enable
privacy amplification but may need a need larger $T$. For a very small
$q$ when the mini batches consist of just a few samples, the added
noise will dominate over the gradient signal and the optimisation will
fail. While in our experiments $q$ was fixed, we could also alter the
$q$ during iteration.

\subsection{MODELS WITH LATENT VARIABLES}

The simple approach in Algorithm \ref{alg:dpvi} will not work well for
models with latent variables.  This is because the main gradient
contributions to latent variables come from only a single data point,
and the amount of noise that would need to be injected to mask the
contribution of this point as needed by DP would make the gradient
effectively useless.

One way to deal with the problem is to take the EM algorithm view
\citep{Dempster1977} of latent variables as a hidden part of a larger
complete data set and apply the DP protection to summaries computed
from the complete data set.  In this approach, which was also used by
\citet{Park2016}, no noise would be injected to the updates of the
latent variables but the latent variables would never be released.

An alternative potentially easier way to avoid this problem is to
marginalise out the latent variables if the model allows this.  As the
DPVI framework works for arbitrary likelihoods we can easily perform
inference even for complicated marginalised likelihoods.  This is a
clear advantage over the VIPS framework of \citet{Park2016} which
requires conjugate exponential family models.

\subsection{SELECTING THE ALGORITHM HYPERPARAMETERS}
\label{sec:select-algor-hyperp}

The DPVI algorithm depends on a number of parameters, the most
important of which are the gradient clipping threshold $c_t$, the data
subsampling ratio $q$ and the number of iterations $T$.  Together
these define the total privacy cost of the algorithm, but it is not
obvious how to find the optimal combination of these under a fixed
privacy budget.  Unfortunately the standard machine learning
hyperparameter adaptation approach of optimising the performance on a
validation set is not directly applicable, as every test run would
consume some of the privacy budget.  Developing good heuristics for
parameter tuning is thus important for practical application of the
method.

Out of these parameters, the subsampling ratio $q$ seems easiest to
interpret.  The gradient that is perturbed in Algorithm~\ref{alg:dpvi}
is a sum over $qN$ samples in the mini batch.  Similarly the standard
deviation of the noise injected with the moments accountant in
Eq.~(\ref{eq:sigmabound}) scales linearly with $q$.  Thus the
signal-to-noise ratio for the gradients will be independent of $q$ and
$q$ can be chosen to minimise the number of iterations $T$.

The number of iterations $T$ is potentially more difficult to
determine as it needs to be sufficient but not too large.  The moments
accountant is somewhat forgiving here as its privacy cost increases
only in proportion to $\sqrt{T}$.  In practice one may need to simply
pick $T$ believed to be sufficiently large and hope for the best.
Poor results in the end likely indicate that the number of samples in
the data set may be insufficient for good results at the given level
of privacy.

The gradient clipping threshold $c_t$ may be the most difficult
parameter to tune as that depends strongly on the details of the
model.  Fortunately our results do not seem overly sensitive to using
the precisely optimal value of $c_t$.  Developing good heuristics for
choosing $c_t$ is an important objective for future research.  Still,
the same problem is shared by every DL method based on gradient
perturbation including the deep learning work of \citet{Abadi2016} and
the DP stochastic gradient MCMC methods of \citet{wang15privacy}.  In
the case of stochastic gradient MCMC this comes up through selecting a
bound on the parameters to bound the Lipschitz constant appearing in
the algorithm.  A global Lipschitz constant for the ELBO would
naturally translate to a $c_t$ guaranteed not to distort the
gradients, but as noted by \citet{Abadi2016}, it may actually be good
to clip the gradients to make the method more robust against outliers.

%%%%%%%%%%%%%%%%%%%%%%%%%%%%%%%%%%%%%%%%%%%%%%%%

\section{EXPERIMENTS}

\subsection{LOGISTIC REGRESSION}

\begin{figure*}[tb]
  \centering
  \includegraphics[width=\columnwidth]{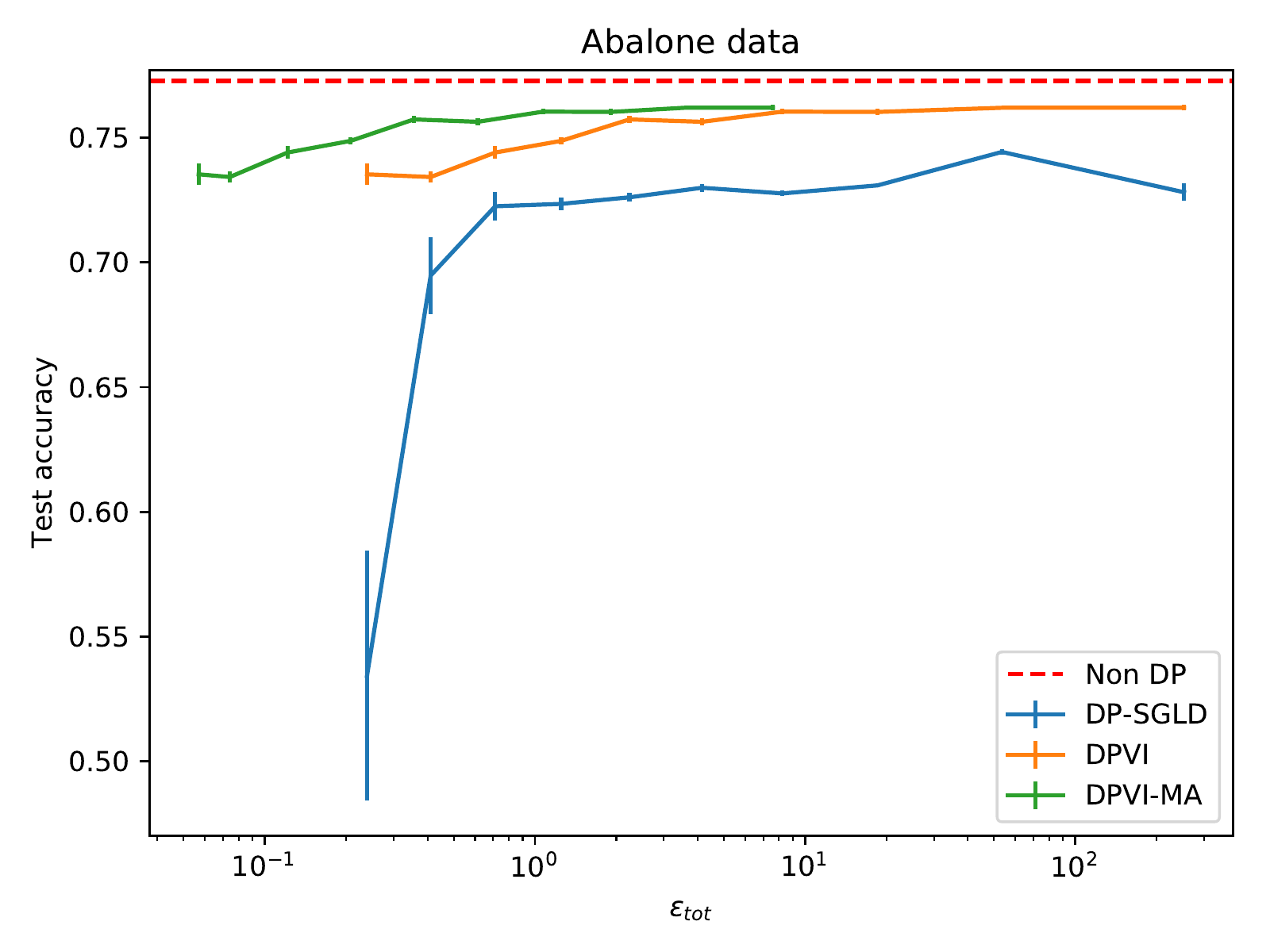} \hfill
  \includegraphics[width=\columnwidth]{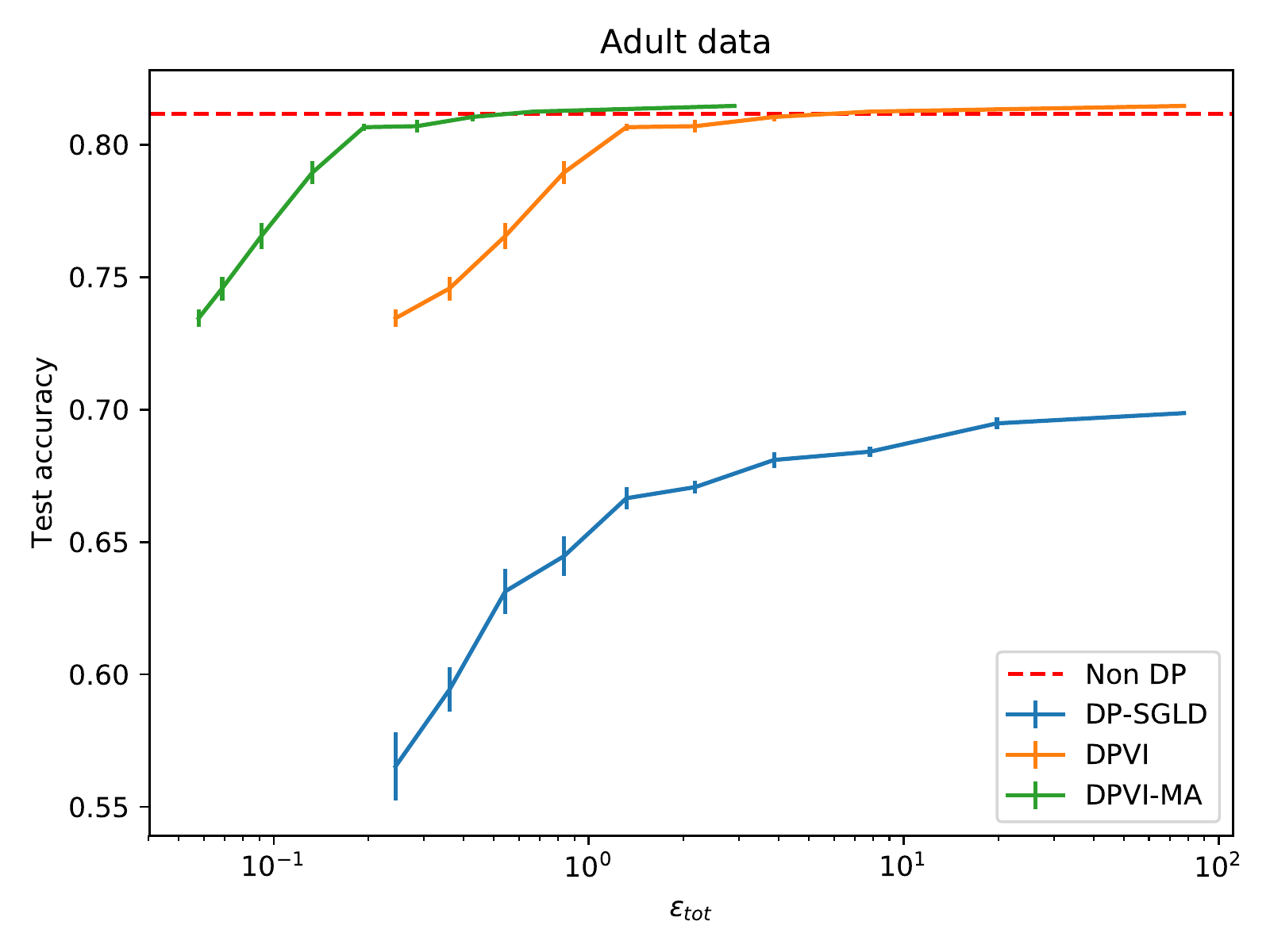}
  \caption{Comparison of binary classification accuracies using the
    Abalone data set (left) and the Adult data set (right).
    The figure shows test set classification
    accuracies of non-private logistic regression, two variants of
    DPVI with the moments accountant and advanced composition
    accounting and DP-SGLD of \citet{wang15privacy}.  
    The curve shows the mean of 10 runs
    of both algorithms with error bars denoting the standard error of
    the mean.}
  \label{fig:dpvi_vs_sgld1}
\end{figure*}

We tested DPVI with two different learning tasks.  
Lets first consider model of logistic regression using the Abalone
and Adult data sets
from the UCI
Machine Learning Repository \citep{Lichman} for the binary
classification task. Our model is:
\begin{align*}
P(y | \x, \w) &= \sigma(y\w^T\x) \\
p(\w) & = N(\w ; \w_0, \mathbf{S}_0),
\end{align*}
where $\sigma(x) = 1/(1+\exp(-x))$.

For Abalone, individuals were divided into two classes based
on whether individual had less or more than 10 rings. The data set
consisted of 4177 samples with 8 attributes. We learned a posterior
approximation
for $\w$ using ADVI with SGA using Adagrad optimiser \citep{Duchi} and sampling 
ratio $q=0.02$.
The posterior approximation $q(\w)$ was Gaussian with a diagonal
covariance.
Classification was done using an additional Laplace approximation. Before training, features of the data set were
normalised by subtracting feature mean and dividing by feature
standard deviation. Training was done with $80\%$ of data.

The other classification dataset ``Adult'' that we used with logistic
regression consisted of
48842 samples with 14 attributes. Our classification task was to predict
whether or not an individual's annual income exceeded \$50K.
The data were preprocessed similarly as in Abalone: we subtracted the
feature mean and divided by the standard deviation of each feature.
We again used $80\%$
of the data for training the model.

We first compared the classification accuracy of models learned using
two variants of DPVI with the moments accountant and advanced
composition accounting as well as DP-SGLD of \citet{wang15privacy}.
The classification results for Abalone and Adult are shown in
Fig.~\ref{fig:dpvi_vs_sgld1}. We used $q=0.05$ in Abalone corresponding
to mini batches of 167 points and $q=0.005$ in Adult corresponding
to mini batches of 195 points.
With Abalone the algorithm was run for 1000 iterations and
with Adult for 2000 iterations. Clipping threshold were 5 for Abalone
and 75 for Adult. Both results clearly show that even
under comparable advanced composition accounting used by DP-SGLD, DPVI
consistently yields significantly higher classification accuracy at a
comparable level of privacy.  Using the moments accountant further
helps in obtaining even more accurate results at comparable level of
privacy.  DPVI with the moments accountant can reach classification
accuracy very close to the non-private level already for $\epsilon < 0.5$
for both data sets.

\subsubsection{The effect of algorithm hyperparameters.}

We further tested how changing the different hyperparameters of the
algorithm discussed in Sec.~\ref{sec:select-algor-hyperp} affects the
test set classification accuracy of models learned with DPVI with the
moments accountant.

\begin{figure*}[tb]
  \centering
  \includegraphics[width=\columnwidth]{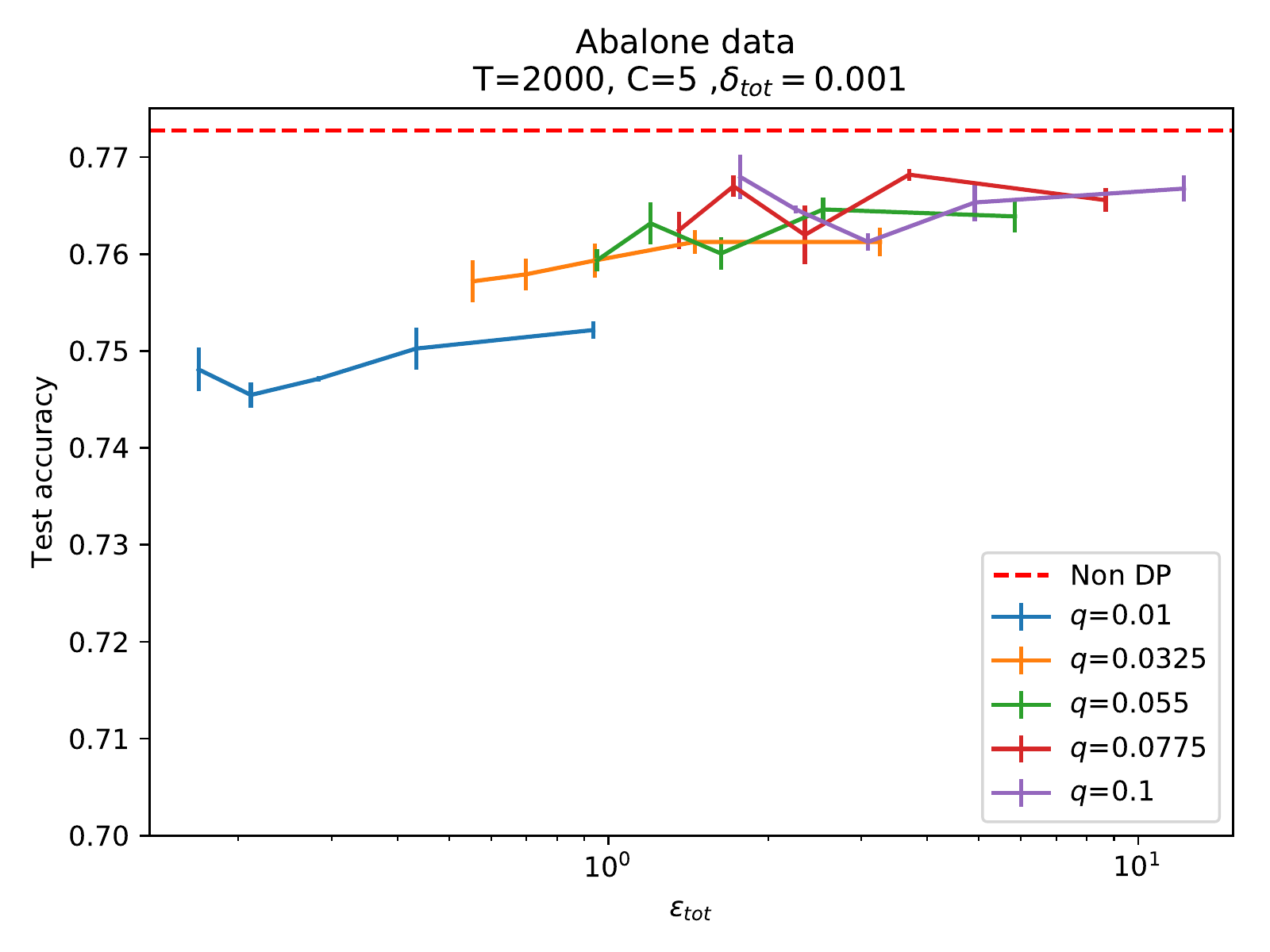}\hfill
  \includegraphics[width=\columnwidth]{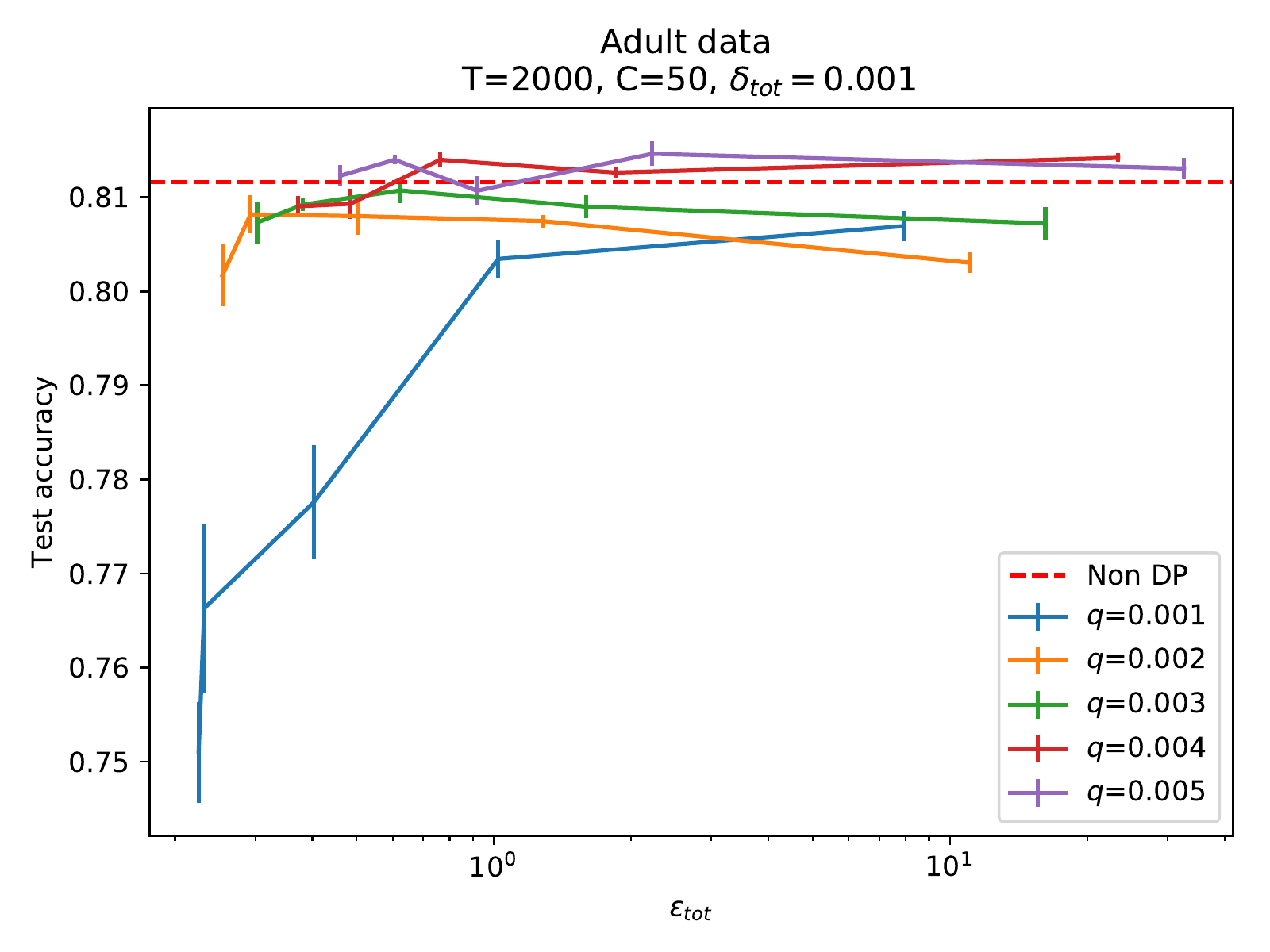}
  \caption{Accuracy vs.\ total $\epsilon$ in Abalone (left) and Adult
    (right) data sets with several data subsampling ratios $q$ in DPVI
    with the moments accountant. The curve shows the mean of 10 runs
    of the DP algorithm with error bars denoting the standard error of
    the mean.  Note that the $y$-axis scale covers a much smaller
    range than in Fig.~\ref{fig:dpvi_vs_sgld1}.}
  \label{fig:q_vs_epsilon}
\end{figure*}

Fig.~\ref{fig:q_vs_epsilon} shows the results when changing the data
subsampling rate $q$.  The result confirms the analysis of
Sec.~\ref{sec:select-algor-hyperp} that larger $q$ tend to perform
better than small $q$ although there is a limit how small values of
$\epsilon$ can be reached with a larger $q$.

\begin{figure*}[tb]
  \centering
  \includegraphics[width=\columnwidth]{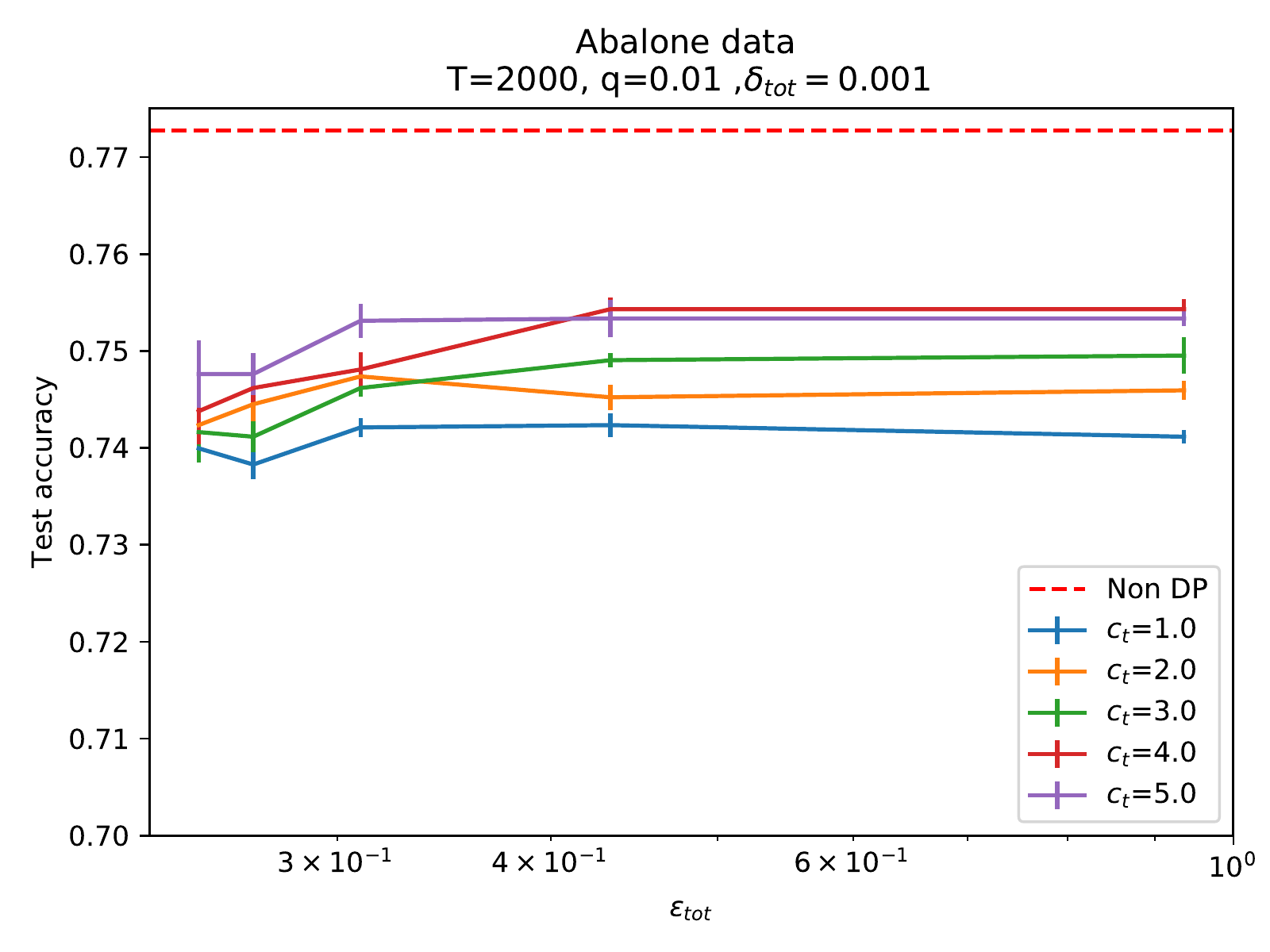}\hfill
  \includegraphics[width=\columnwidth]{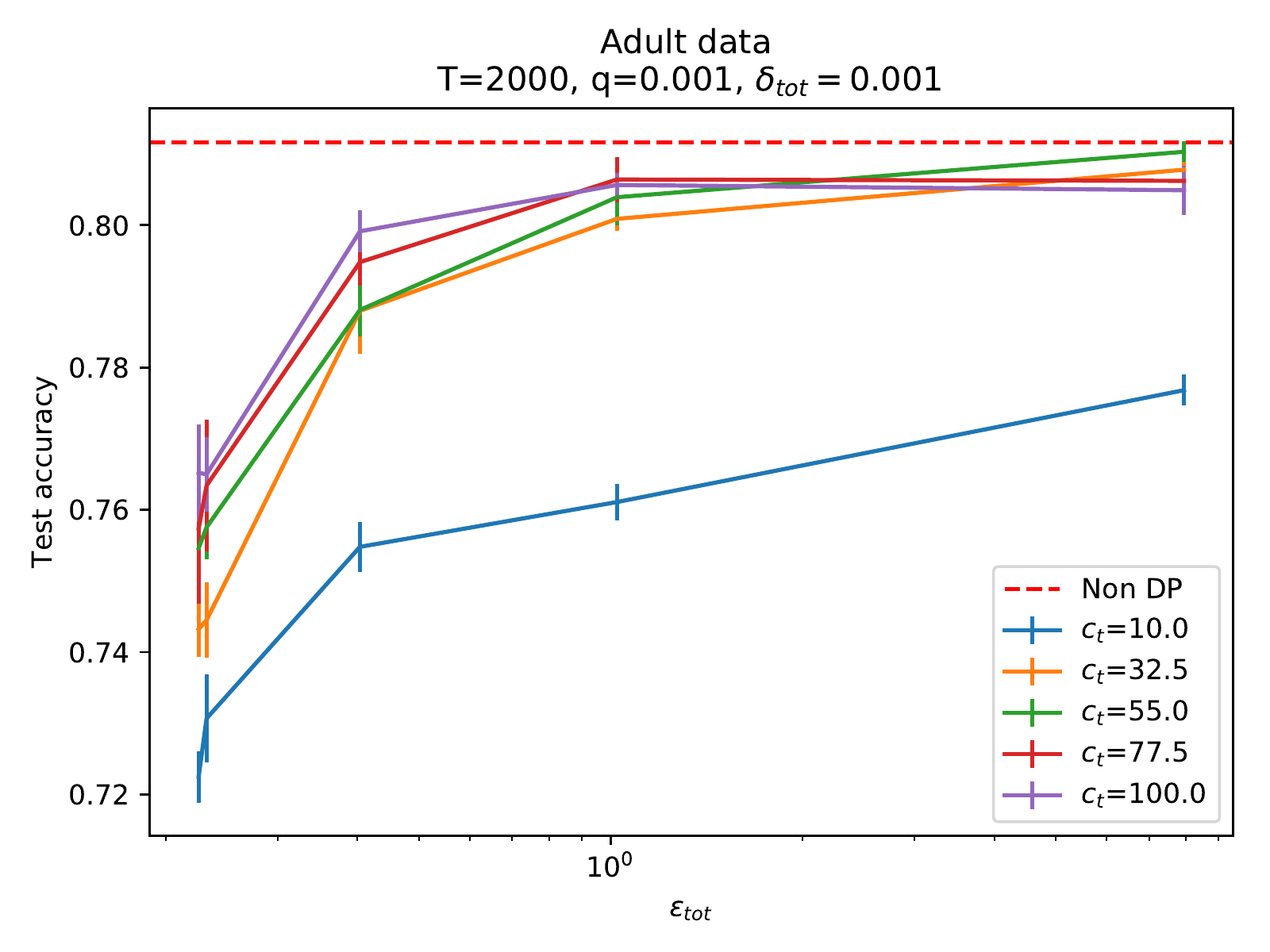}
  \caption{Accuracy vs.\ total $\epsilon$ in Abalone (left) and Adult
    (right) data sets with several gradient clipping threshold $c_t$
    values in DPVI with the moments accountant.
    The curve shows the mean of 10 runs of
    the DP algorithm with error bars denoting the standard error of
    the mean.  Note that the $y$-axis scale covers a much smaller
    range than in Fig.~\ref{fig:dpvi_vs_sgld1}.}
  \label{fig:c_vs_epsilon}
\end{figure*}

Fig.~\ref{fig:c_vs_epsilon} shows corresponding results when changing
the gradient clipping threshold $c_t$.  The results clearly show that
too strong clipping can hurt the accuracy significantly.  Once the
clipping is sufficiently mild the differences between different
options are far less dramatic.

\subsection{GAUSSIAN MIXTURE MODEL}

We also tested the performance of DPVI with a Gaussian mixture model.
For $K$ components our model is
\begin{align*}
\pi_k &\sim \text{Dir}(\alpha) \\
\mu^{(k)} &\sim \text{MVNormal}(\mathbf{0}, \mathbf{I}) \\
\tau^{(k)} &\sim \text{Inv-Gamma}(1,1) \\
\end{align*}
with the likelihood
\begin{align*}
p(\mathbf{x}_i |\boldsymbol\pi ,\boldsymbol\mu, \boldsymbol\tau) = \sum_{k=1}^K \pi_k \mathcal{N}(\mathbf{x_i}; \mu^{(k)}, \tau^{(k)} \mathbf{I}).
\end{align*}

Unlike standard variational inference that augments the model with
indicator variables denoting the component responsible for generating
each sample, we performed the inference directly on the mixture
likelihood.  This lets us avoid having to deal with latent variables
that would otherwise make the DP inference more complicated.

The posterior approximation
$q(\pi, \mu, \tau) = q(\pi) q(\mu) q(\tau)$ was fully factorised.
$q(\pi)$ was parametrised using softmax transformation from a diagonal
covariance Gaussian while $q(\mu)$ was Gaussian with a diagonal
covariance and $q(\tau)$ was log-normal with a diagonal covariance.

The synthetic data used in experiments was drawn from mixture of five 
spherical multivariate Gaussian distribution with means $[0,0], [\pm 2, \pm 2]$
and covariance matrices $\mathbf{I}$.  Similar data has been used
previously by \cite{Honkela2010JMLR} and \cite{Hensman2012}.
We used 1000 samples from this mixture for
training the model and 100 samples to test the performance.
We used both DPVI and DP-SGLD for this data. Performance comparison was done by
computing the predictive likelihoods for both algorithms with several different
epsilon values. 
We also show one example of approximate distribution that DPVI learns from above 
mixture model.

\begin{figure}
  \includegraphics[width=\columnwidth]{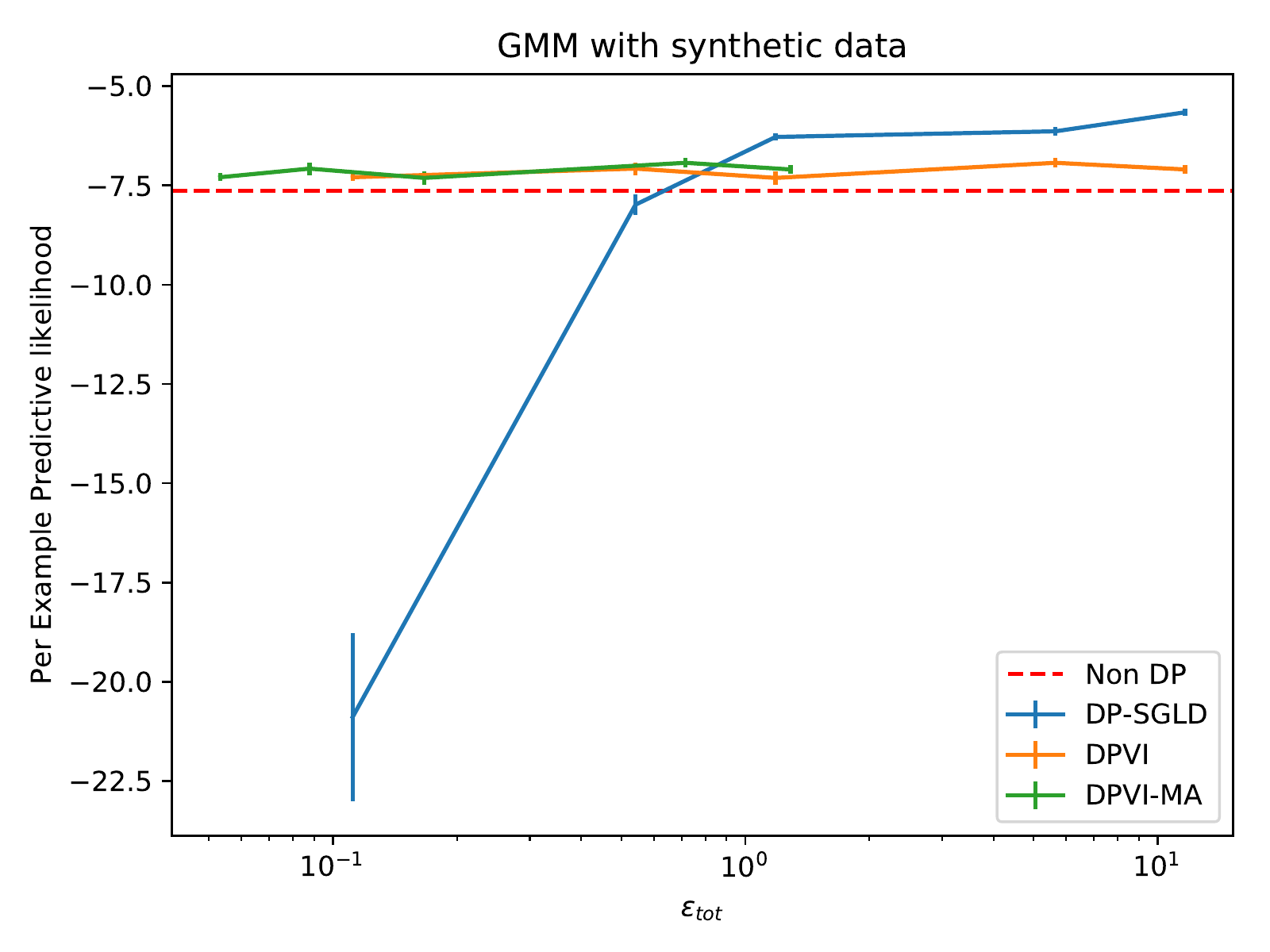}
  \caption{Per example predictive likelihood vs. $\epsilon$. For both DP-SGLD and
   DPVI, lines show mean between 10 runs of algorithm with error bars denoting the
   standard error of mean.}
  \label{fig:pred_like_comp}
\end{figure}

From Fig.~\ref{fig:pred_like_comp} we can see that DPVI algorithm performs well 
compared to non-private version of DPVI even with relatively small epsilon values.
However DP-SGLD seems to outperfom DPVI when we consider more flexible privacy requirements.
Both DP-SGLD and DPVI used $q=0.003$. We let
both algorithms run for 3000 iterations. Gradient clipping threshold for DPVI was set to $c_t = 5$.
We used $\delta = 0.001$ in the predictive likelihood comparison.
For DPVI predictive likelihood was approximated by Monte-Carlo integration using 
samples from the learned approximate posterior and for DP-SGLD by using the last 100 samples
the algorithm produced. 
Non-private results were obtained by setting $\sigma=0$ in DPVI, using $q=0.01$
and running the algorithm for 5000 iterations.

\begin{figure*}[tb]
  \includegraphics[width=\columnwidth]{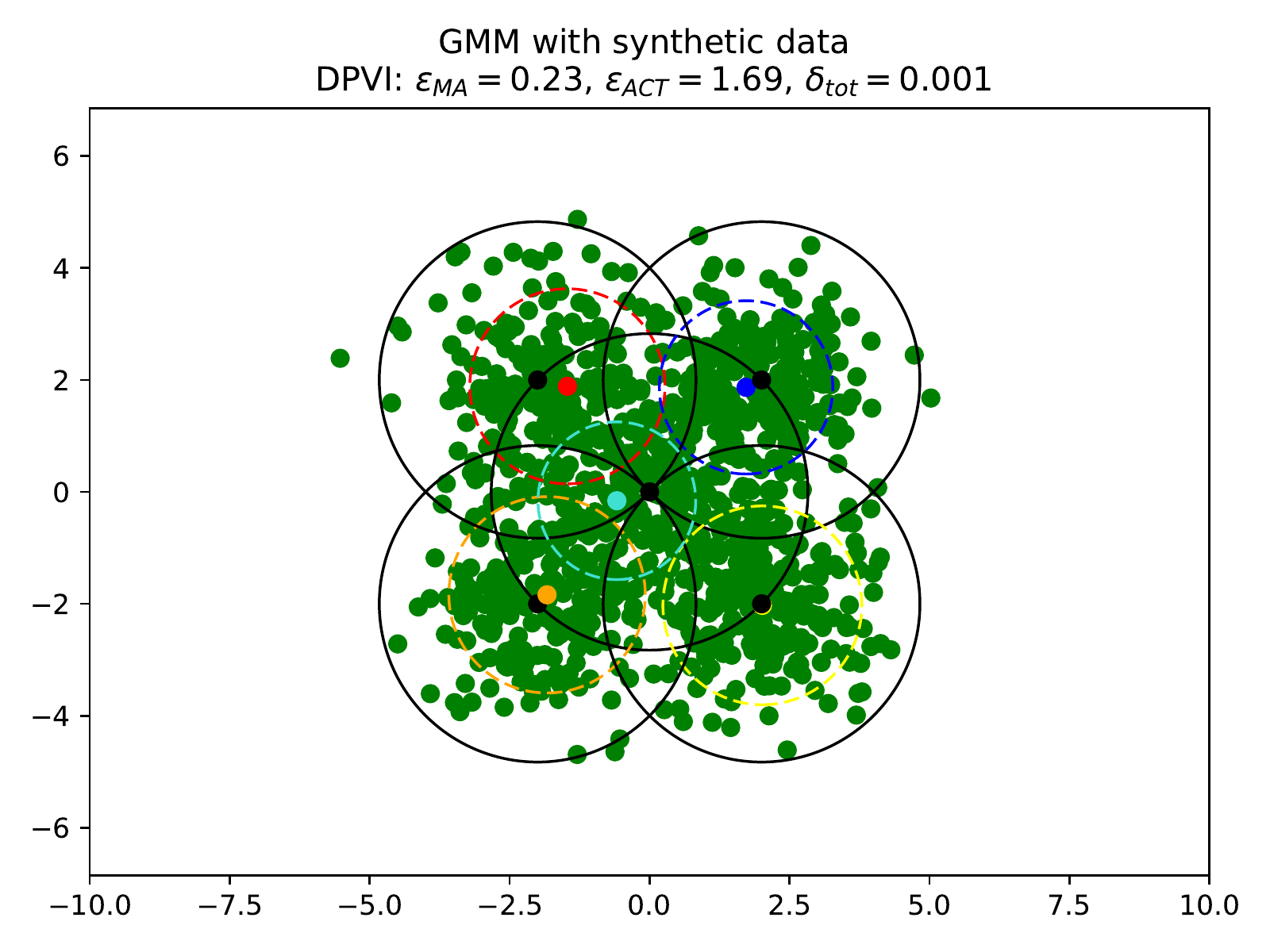}\hfill
  \includegraphics[width=\columnwidth]{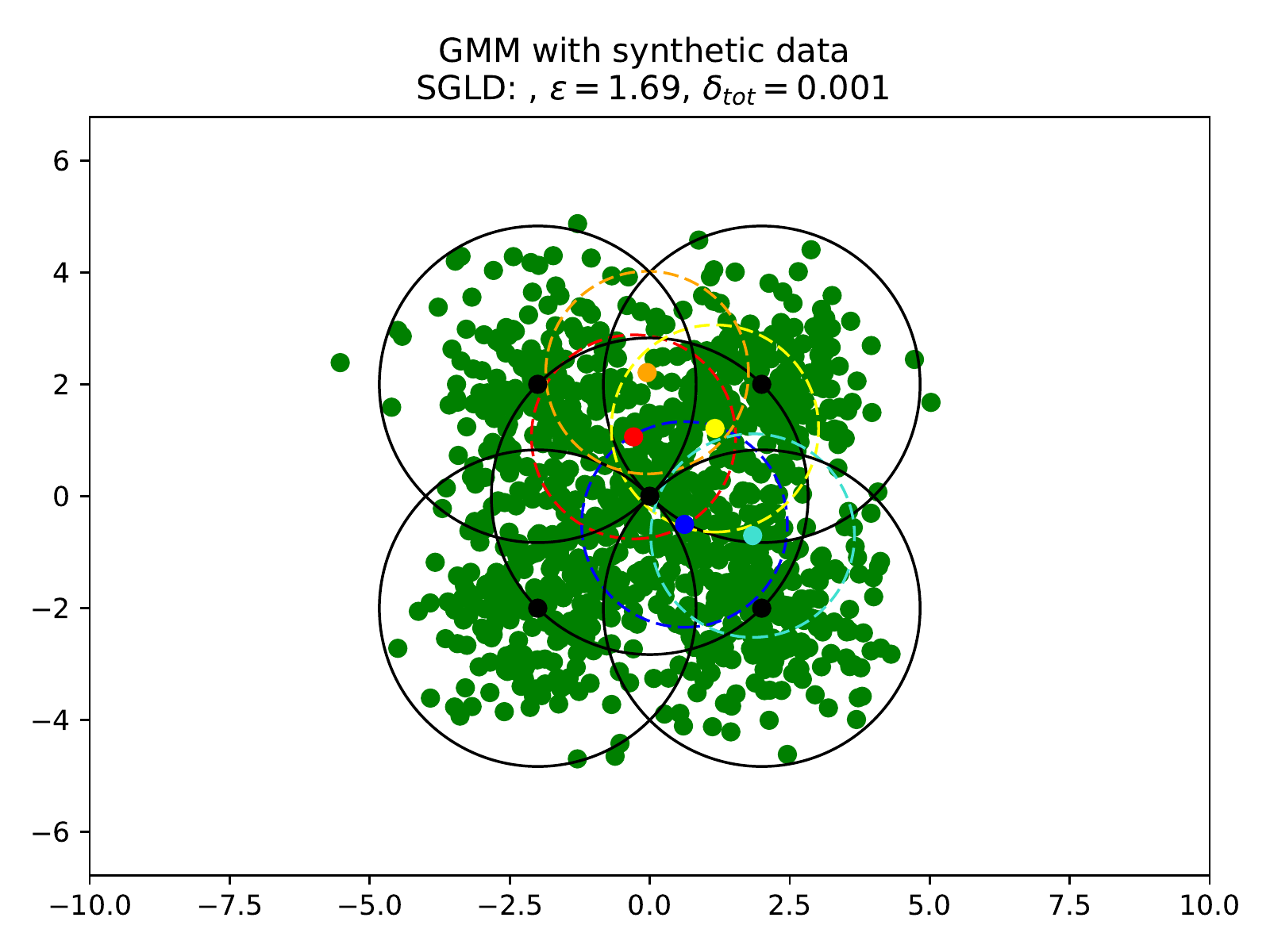}
  \caption{Approximate posterior predictive distribution for the
    Gaussian mixture model learned with DPVI (left) and DP-SGLD
    (right).  The DP-SGLD distribution is formed as an average
    over the last 100 samples from the algorithm.}
  \label{fig:gmm_dpvi}
\end{figure*}

Fig.~\ref{fig:gmm_dpvi} shows a visualisation of the mixture components
learned by DPVI and DP-SGLD.  Components inferred by DPVI appear much
closer to ground truth than those from DP-SGLD.

%%%%%%%%%%%%%%%%%%%%%%%%%%%%%%%%%%%%%%%%%%%%%%%%

\section{DISCUSSION}

Our results demonstrate that the proposed DPVI method has the
potential to produce very accurate learning results, but this requires
finding good values for algorithmic hyperparameters that unfortunately
cannot be tuned using standard approaches without compromising the
privacy.  Finding good heuristics and default values for the
hyperparameters is a very important avenue of future research.

It is tempting to think that the effect of gradient clipping would
disappear as the algorithm converges and the total gradient becomes
smaller.  Unfortunately this is not true as the clipping is applied on
the level of data point specific gradients which will typically not
disappear even at convergence.  This also means that aggressive
clipping will change the stationary points of the SGA algorithm.

One way to make the problem easier to learn under DP is to simplify it
for example through dimensionality reduction.  This was noted for
exponential family models by \citet{Honkela2016} but the same
principle carries over to DPVI too.  In DPVI, lower dimensional model
typically has fewer parameters leading to a shorter parameter vector
whose norm would thus be smaller, implying that smaller $c_t$ is
enough.  This means that simpler posterior approximations such as
Gaussians with a diagonal covariance may be better under DP while
without the DP constraint an approximation with a full covariance
would usually be better \citep[see also][]{Kucukelbir2017}.

%%%%%%%%%%%%%%%%%%%%%%%%%%%%%%%%%%%%%%%%%%%%%%%%

\section{CONCLUSIONS}
We have introduced the DPVI method that can deliver differentially
private inference results with accuracy close to the non-private
doubly stochastic variational inference and ADVI.
The method can effectively harness the power of ADVI to deal with very
general models instead of just conjugate exponential models and the
option of using multivariate Gaussian posterior approximations for
greater accuracy.

%%%%%%%%%%%%%%%%%%%%%%%%%%%%%%%%%%%%%%%%%%%%%%%%

\subsubsection*{Acknowledgements}

This work was funded by the Academy of Finland (Centre of
Excellence COIN; and grants 278300, 259440 and 283107).

%%%%%%%%%%%%%%%%%%%%%%%%%%%%%%%%%%%%%%%%%%%%%%%%

\bibliography{dpvb}

\begin{thebibliography}{27}
\providecommand{\natexlab}[1]{#1}
\providecommand{\url}[1]{\texttt{#1}}
\expandafter\ifx\csname urlstyle\endcsname\relax
  \providecommand{\doi}[1]{doi: #1}\else
  \providecommand{\doi}{doi: \begingroup \urlstyle{rm}\Url}\fi

\bibitem[Abadi et~al.(2016)Abadi, Chu, Goodfellow, McMahan, Mironov, Talwar,
  and Zhang]{Abadi2016}
M.~Abadi, A.~Chu, I.~Goodfellow, H.~B. McMahan, I.~Mironov, K.~Talwar, and
  L.~Zhang.
\newblock Deep learning with differential privacy.
\newblock 2016.
\newblock arXiv:1607.00133 [stat.ML].

\bibitem[Bun and Steinke(2016)]{Bun2016}
M.~Bun and T.~Steinke.
\newblock Concentrated differential privacy: Simplifications, extensions, and
  lower bounds.
\newblock May 2016.
\newblock arXiv:1605.02065 [cs.CR].

\bibitem[Chaudhuri and Monteleoni(2008)]{chaudhuri08privacy}
K.~Chaudhuri and C.~Monteleoni.
\newblock Privacy-preserving logistic regression.
\newblock In \emph{Adv. Neural Inf. Process. Syst. 21}, pages 289--296, 2008.

\bibitem[Dempster et~al.(1977)Dempster, Laird, and Rubin]{Dempster1977}
A.~P. Dempster, N.~M. Laird, and D.~B. Rubin.
\newblock Maximum likelihood from incomplete data via the em algorithm.
\newblock \emph{Journal of the Royal Statistical Society. Series B
  (Methodological)}, 39\penalty0 (1):\penalty0 1--38, 1977.

\bibitem[Dimitrakakis et~al.(2014)Dimitrakakis, Nelson, Mitrokotsa, and
  Rubinstein]{Dimitrakakis2014}
C.~Dimitrakakis, B.~Nelson, A.~Mitrokotsa, and B.~I.~P. Rubinstein.
\newblock Robust and private {B}ayesian inference.
\newblock In \emph{ALT 2014}. 2014.

\bibitem[Duchi et~al.(2011)Duchi, Hazan, and Singer]{Duchi}
J.~Duchi, E.~Hazan, and Y.~Singer.
\newblock Adaptive subgradient methods for online learning and stochastic
  optimization.
\newblock \emph{J. Mach. Learn. Res.}, 12:\penalty0 2121--2159, July 2011.

\bibitem[Dwork and Roth(2014)]{DworkRoth}
C.~Dwork and A.~Roth.
\newblock The algorithmic foundations of differential privacy.
\newblock \emph{Found. Trends Theor. Comput. Sci.}, 9\penalty0 (3--4):\penalty0
  211--407, August 2014.

\bibitem[Dwork and Rothblum(2016)]{Dwork2016}
C.~Dwork and G.~N. Rothblum.
\newblock Concentrated differential privacy.
\newblock 2016.
\newblock arXiv:1603.01887 [cs.DS].

\bibitem[Dwork et~al.(2006)Dwork, McSherry, Nissim, and
  Smith]{dwork_et_al_2006}
C.~Dwork, F.~McSherry, K.~Nissim, and A.~Smith.
\newblock Calibrating noise to sensitivity in private data analysis.
\newblock In \emph{TCC 2006}. 2006.

\bibitem[Foulds et~al.(2016)Foulds, Geumlek, Welling, and
  Chaudhuri]{foulds16theory}
J.~Foulds, J.~Geumlek, M.~Welling, and K.~Chaudhuri.
\newblock On the theory and practice of privacy-preserving {B}ayesian data
  analysis.
\newblock In \emph{Proc.\ 32nd Conf.\ on Uncertainty in Artificial Intelligence
  (UAI 2016)}, 2016.

\bibitem[Ghahramani and Beal(2001)]{ghahramani2001propagation}
Z.~Ghahramani and M.~J. Beal.
\newblock Propagation algorithms for variational {B}ayesian learning.
\newblock In T.~K. Leen, T.~G. Dietterich, and V.~Tresp, editors,
  \emph{Advances in Neural Information Processing Systems 13}, pages 507--513.
  MIT Press, 2001.

\bibitem[Hensman et~al.(2012)Hensman, Rattray, and Lawrence]{Hensman2012}
J.~Hensman, M.~Rattray, and N.~D. Lawrence.
\newblock Fast variational inference in the conjugate exponential family.
\newblock In \emph{Advances in Neural Information Processing Systems 25}, pages
  2897--2905. 2012.

\bibitem[Honkela et~al.(2010)Honkela, Raiko, Kuusela, Tornio, and
  Karhunen]{Honkela2010JMLR}
A.~Honkela, T.~Raiko, M.~Kuusela, M.~Tornio, and J.~Karhunen.
\newblock Approximate {R}iemannian conjugate gradient learning for fixed-form
  variational {B}ayes.
\newblock \emph{J Mach Learn Res}, 11:\penalty0 3235--3268, Nov 2010.

\bibitem[Honkela et~al.(2016)Honkela, Das, Dikmen, and Kaski]{Honkela2016}
A.~Honkela, M.~Das, O.~Dikmen, and S.~Kaski.
\newblock Efficient differentially private learning improves drug sensitivity
  prediction.
\newblock 2016.
\newblock arXiv:1606.02109.

\bibitem[Jordan et~al.(1999)Jordan, Ghahramani, Jaakkola, and
  Saul]{jordan99introduction}
M.~I. Jordan, Z.~Ghahramani, T.~S. Jaakkola, and L.~K. Saul.
\newblock An introduction to variational methods for graphical models.
\newblock \emph{Mach. Learn.}, 37\penalty0 (2):\penalty0 183--233, November
  1999.

\bibitem[Kucukelbir et~al.(2017)Kucukelbir, Tran, Ranganath, Gelman, and
  Blei]{Kucukelbir2017}
A.~Kucukelbir, D.~Tran, R.~Ranganath, A.~Gelman, and D.~M. Blei.
\newblock Automatic differentiation variational inference.
\newblock \emph{J Mach Learn Res}, 18\penalty0 (14):\penalty0 1--45, 2017.

\bibitem[Li et~al.(2012)Li, Qardaji, and Su]{Li}
N.~Li, W.~Qardaji, and D.~Su.
\newblock On sampling, anonymization, and differential privacy or,
  k-anonymization meets differential privacy.
\newblock In \emph{Proceedings of the 7th ACM Symposium on Information,
  Computer and Communications Security}, ASIACCS '12, pages 32--33, New York,
  NY, USA, 2012. ACM.

\bibitem[Lichman(2013)]{Lichman}
M.~Lichman.
\newblock {UCI} machine learning repository, 2013.
\newblock URL \url{http://archive.ics.uci.edu/ml}.

\bibitem[Park et~al.(2016)Park, Foulds, Chaudhuri, and Welling]{Park2016}
M.~Park, J.~Foulds, K.~Chaudhuri, and M.~Welling.
\newblock Variational {B}ayes in private settings ({VIPS}).
\newblock 2016.
\newblock arXiv:1611.00340 [stat.ML].

\bibitem[Salvatier et~al.(2016)Salvatier, Wiecki, and
  Fonnesbeck]{Salvatier2016}
J.~Salvatier, T.~V. Wiecki, and C.~Fonnesbeck.
\newblock Probabilistic programming in {P}ython using {PyMC}3.
\newblock \emph{{PeerJ} Computer Science}, 2:\penalty0 e55, apr 2016.

\bibitem[Titsias and Lázaro-Gredilla(2014)]{Titsias2014}
M.~Titsias and M.~Lázaro-Gredilla.
\newblock Doubly stochastic variational {B}ayes for non-conjugate inference.
\newblock In \emph{Proc. 31st Int. Conf. Mach. Learn. (ICML 2014)}, pages
  1971--1979, 2014.

\bibitem[Tran et~al.(2017)Tran, Hoffman, Saurous, Brevdo, Murphy, and
  Blei]{tran2017deep}
D.~Tran, M.~D. Hoffman, R.~A. Saurous, E.~Brevdo, K.~Murphy, and D.~M. Blei.
\newblock Deep probabilistic programming.
\newblock In \emph{International Conference on Learning Representations}, 2017.

\bibitem[Wang et~al.(2015)Wang, Fienberg, and Smola]{wang15privacy}
Y.~Wang, S.~E. Fienberg, and A.~J. Smola.
\newblock Privacy for free: Posterior sampling and stochastic gradient {M}onte
  {C}arlo.
\newblock In \emph{Proc. 32nd Int. Conf. Mach. Learn. (ICML 2015)}, pages
  2493--2502, 2015.

\bibitem[Williams and McSherry(2010)]{williams10probabilistic}
O.~Williams and F.~McSherry.
\newblock Probabilistic inference and differential privacy.
\newblock In \emph{Adv. Neural Inf. Process. Syst. 23}, 2010.

\bibitem[Zhang et~al.(2014)Zhang, Cormode, Procopiuc, Srivastava, and
  Xiao]{zhang14privbayes}
J.~Zhang, G.~Cormode, C.~M. Procopiuc, D.~Srivastava, and X.~Xiao.
\newblock Priv{B}ayes: {P}rivate data release via {B}ayesian networks.
\newblock In \emph{SIGMOD’14}, pages 1423--1434, 2014.

\bibitem[Zhang et~al.(2012)Zhang, Zhang, Xiao, Yang, and
  Winslett]{zhang12functional}
J.~Zhang, Z.~Zhang, X.~Xiao, Y.~Yang, and M.~Winslett.
\newblock Functional mechanism: Regression analysis under differential privacy.
\newblock \emph{{PVLDB}}, 5\penalty0 (11):\penalty0 1364--1375, 2012.

\bibitem[Zhang et~al.(2016)Zhang, Rubinstein, and
  Dimitrakakis]{zhang16differential}
Z.~Zhang, B.~Rubinstein, and C.~Dimitrakakis.
\newblock On the differential privacy of {B}ayesian inference.
\newblock In \emph{Proc. Conf. AAAI Artif. Intell. 2016}, 2016.

\end{thebibliography}
\bibliographystyle{myabbrvnat}
\end{document}